\DeclareMathOperator*{\argmin}{argmin}
\renewcommand{\thispagestyle}[1]{}  
\begin{document}
 \pagestyle{fancy} \lfoot{\footnotesize {This paper has been published in Optics Express, Vol. 23, Issue 3, pp. 2220-2239. The final version is available on 
 http://dx.doi.org/10.1364/OE.23.002220. Please refer to that version when citing this paper.}}

\title{Pixel-wise orthogonal decomposition for color illumination invariant and shadow-free image}

\author{Liangqiong Qu,$^{1,2}$ Jiandong Tian,$^{1}$ Zhi Han,$^{1}$ and Yandong Tang$^{1,*}$}

\address{$^1$ State Key Laboratory of Robotics, Shenyang Institute of Automation, Chinese Academy of Sciences, Shenyang, 110016, China \\
$^2$University of Chinese Academy of Sciences, Beijing, 100049, China}

\email{$^*$Corresponding author: ytang@sia.cn} 



\begin{abstract}
In this paper, we propose a novel, effective and fast method to obtain a color illumination invariant and shadow-free image
  from a single outdoor image. Different from state-of-the-art methods for shadow-free image that either need shadow detection or statistical
  learning, we set up a linear equation set for each pixel value vector based on physically-based shadow invariants,
  deduce a pixel-wise orthogonal decomposition for its solutions, and then get an illumination invariant vector for each
  pixel value vector on an image. The illumination invariant vector is the unique particular solution of the linear equation set,
  which is orthogonal to its free solutions. With this illumination invariant vector and Lab color space, we propose an algorithm
  to generate a shadow-free image which well preserves the texture and color information of the original image. A series of experiments on
   a diverse set of outdoor images and the comparisons with the state-of-the-art methods validate our method.
\end{abstract}

\ocis{(100.0100) Image processing; (100.3010) Image reconstruction techniques.} 


\section{Introduction}
Shadows, a physical phenomenon observed in most natural scenes, are problematic to many computer vision algorithms.
They often bring some wrong results in edge detection, segmentation, target recognition and tracking. For these reasons,
shadow processing and illumination invariant on an image are of great practical significance and have attracted a
lot of attentions \cite{land1971,weiss2001,finlayson2001color,lin2006separation,jung2011detecting,barron2013}. Some previous works dealt with this problem by
utilizing additional information from multiple images \cite{finlayson2007,laffont2012,laffont2013}, time-lapse  image sequences \cite{weiss2001,matsushita2004,huerta2009}
or user inputs \cite{wu2005,bousseau2009,laffont2013}. Although these methods achieve impressive results for illumination invariant and
shadow-free image, they may not be applied automatically to a single image. In general, a robust method for illumination invariant
and shadow-free on a single image is more favored and challenging. The recent works related to a shadow-free image on a
single image can be divided into two categories: methods that need shadow detection and methods that do not need shadow detection.

\textbf{Methods with shadow detection.} Most works for shadow-free image often undergo two basic steps: shadow detection and shadow removal \cite{arbel2011}.
A fair proportion of shadow detection methods are based on shadow features with statistical learning \cite{guo2012,lalonde2010,tappen2005,zhu2010}.
The most commonly used shadow features are intensity, histograms, texture, color ratio, geometry property, and gradient.
Lalonde et al. \cite{lalonde2010} obtained shadow removal results in consumer-grade photographs by a conditional random field (CRF) learning approach
on color ratio, texture and intensity distributions. Based on initial segmentation, the method proposed by Guo et al. \cite{guo2012} detects shadows
using SVM on shadow features (e.g, texture, chromaticity, and pairwise relation) and then relights each pixel to obtain a shadow-free image.
These shadow features may not be robust enough in some applications. For example, shadowed regions are often dark, with less texture and little gradient,
but some non-shadowed regions may also have similar characteristics. Due to the lack of robust shadow features, these shadow feature and learning
based methods are not always effective, especially in soft shadow regions or some complex scenarios. Besides, all these learning-based methods are
usually time-consuming since large complex features are needed to be analyzed for classifiers. Some other works for shadow-free image with
shadow detection are based on computing physically-based illumination invariants \cite{finlayson2006,finlayson2009,tian2009,Tian2011}.
With a physically-based illumination invariant, Finlayson et al. \cite{finlayson2006,finlayson2009} proposed a method to obtain a shadow-free
image by shadow edge detection and solving a Poisson\textquoteright s equation. In \cite{Tian2011}, we deduced a linear relationship between pixel values
of a surface in and out of shadow regions, and then proposed a shadow removal method based on shadow edge detection.
In the condition of correct shadow edge detection, the method proposed by Arbel and Hel-Or \cite{arbel2011} yields a shadow-free image by
finding scale factors to cancel the effect of shadows. All these mentioned methods require shadow edges to be correctly detected. However,
automatic edge detection is known to be non-robust \cite{jain1995}, especially in complex scenarios and texture (such as the shadows of leaves and grasses).
The defect of these shadow detection based methods is revealed in \cite{yang2012}. Even when the shadows are detected rightly, retaining the
original color and texture in a shadow-free image is also a challenging problem \cite{arbel2011,liu2008texture}.

\textbf{Methods without shadow detection.} Another effective approach for shadow-free image is
intrinsic image via image decomposition. Tappen et al. \cite{tappen2005} proposed a learning-based approach to predict the derivatives of reflectance
and illumination images. Although this method successfully separates reflectance and shading for a given illumination direction used for training, it is not
designed for arbitrary lighting directions \cite{yang2012}. Assuming that the neighboring pixels that have similar intensity values should have similar reflectance,
Shen et al. \cite{shen2011} proposed both an automatic and user scribbles method to separate intrinsic image from a single image with optimization.
Although their automatic method is able to extract most of the illumination variations, residual shadows remain in the reflectance image and they stated that user scribbles cannot improve the result significantly.
Finlayson et al. \cite{finlayson2009} derived a grayscale shadow-free image by finding the special direction in a 2-D chromaticity feature space.
In \cite{Tian2011}, we obtained a grayscale shadow-free image based on a physical deduced linear model from the view of atmospheric transmittance effects.
These derived grayscale shadow-free images have some limitations in real applications due
to the loss of color information. Applying narrowband camera sensors in deduction, Maxwell et al. \cite{maxwell2008} found that a 2-D illumination invariant image can be obtained
by projecting the colors (in log-RGB) onto a plane. In their method, the projecting normal vector is determined either from a user input or a 2-D
entropy searching, which may have some limitations in real applications. With the assumption of chromatic surfaces, Planckian lighting and narrowband camera sensors, Yang et al. \cite{yang2012} obtained a shadow-free image
using bilateral filtering without shadow detection. The comparison results in \cite{yang2012} shows that this method for shadow-free image outperforms Finlayson\textquoteright s\cite{finlayson2006} (a classical method for shadow-free image that need shadow detection).
However, based solely on chromaticity, this method may not be valid on neutral regions when the neighboring regions of the image are also neutral \cite{yang2012}.
The experimental results presented in section $\mathbf{4}$ of this paper also
show that bilateral filtering often makes the method in \cite{yang2012} fail to recover a shadow-free image from the
image with darker shadows or in which shadows pervade in a scenario.
\begin{figure}[h]
\centering
\includegraphics[width=0.98\linewidth]{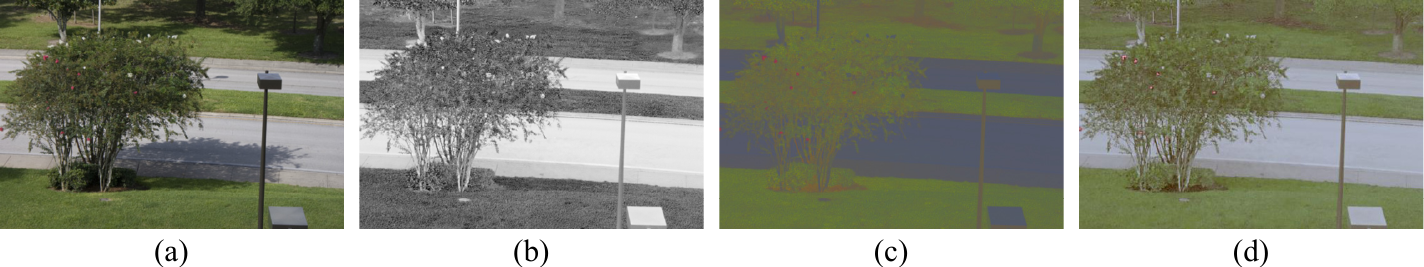}
   \caption{One experimental result of our algorithm. (a) Original image. (b) One of the three grayscale illumination invariant images. (c) Color illumination
   invariant image with our  pixel-wise orthogonal decomposition. (d) Shadow-free image after color restoration.}
\label{fig:illustration}
\end{figure}

In this paper, we propose a novel, effective and efficient method to obtain a shadow-free image from a single outdoor image without feature
extraction or shadow detection. The research is based on our previous work of grayscale illumination
invariant image derivation from a physically-based shadow linear model \cite{Tian2011}.
With three different grayscale illumination invariant images (e.g., Fig. \ref{fig:illustration}(b)) from this linear model \cite{Tian2011},
we apply them to set up three linear equations for each pixel value vector. The solution space of these linear equations is decomposed into
one-dimensional nullspace (free solution space) and particular solution space. The free solution, representation of illuminants ratio, is only determined by illumination condition.
The particular solution, which is perpendicular to free solutions, is unique and illumination invariant.
It retains the basic color information of the original pixel value vector. Therefore, we name this process as pixel-wise orthogonal decomposition for color
illumination invariant. Due to that this color illumination invariant image (e.g., Fig. \ref{fig:illustration}(c)) still has some color distortion compared
with the original image, combining color illumination invariant image and Lab color space, we propose an algorithm to generate a shadow-free image (e.g.,
Fig. \ref{fig:illustration}(d)), which better preserves the color and texture information of the original image.

The main contributions of this paper are: (1) We propose a new pixel-wise orthogonal decomposition method to obtain a shadow-free image without feature extraction or shadow
detection. (2) Our method requires only once pixel-wise calculation on an image and can run in real time. (3) This pixel-wise orthogonal decomposition retains the original
texture well. (4) The color illumination invariant image yields an identical reflectivity result regardless of illumination condition, which is both invariant for different illumination
conditions and shadows.

The rest of the paper is organized as follows. In Section $\mathbf{2}$, we give a brief introduction of the derivation of grayscale illumination
invariant image and then present our orthogonal decomposition method for color illumination invariant image from a single image. In section
$\mathbf{3}$, we present a color restoration method for the generation of shadow-free image from color illumination invariant image.
Some quantitative and qualitative experiments on a diverse set of shadow images are given in
Section $\mathbf{4}$. We end this paper with a brief discussion and conclusion in section $\mathbf{5}$.
\section{Pixel-wise orthogonal decomposition and color illumination invariant image}
In this section, we first give a brief introduction of our previous work of shadow linear model
and the derivation of relevant grayscale illumination invariant image. Then we present our pixel-wise orthogonal
decomposition method for color illumination invariant image from a single image.

\subsection{Pixel-wise orthogonal decomposition}
\subsubsection{Shadow linear model on an image \cite{Tian2011}} For visible light whose spectrum ranges
from 400nm to 700nm, given spectral power distributions (SPD) of illumination $E(\lambda)$, object reflectance $S(\lambda)$
and sRGB color matching functions $Q(\lambda)$, the RGB tristimulus values ($\mathscr{F}_H$) in sRGB color space are computed by,
\begin{equation}
{\mathscr{F}_H} = \int_{400}^{700} {E(\lambda )S(\lambda ){Q_H}(\lambda )} d\lambda
\end{equation}
where $H=\{R, G, B\}$ represent red, green, and blue channel respectively. Light emitted from the sun will be scattered by
atmospheric transmittance effects which causes the incident light to be split into direct sunlight and diffuse skylight.
The illumination for shadow is skylight while for non-shadow background is both sunlight and skylight. In clear weather
when shadows most probably take place, the SPD of daylight and skylight show strong regularities, and they are mainly
controlled by sun angles. Based on these principals, it is revealed that the sRGB tristimulus values of a surface illuminated
by daylight are proportional to those of the same surface illuminated by
skylight in each of the three color channels, i.e.,

\begin{equation}
\log ({F_H} + 14) = \frac{{\log ({K_H})}}{{2.4}} + \log ({f_H} + 14)
 \label{Linear}
\end{equation}
where $F_H$ denotes the RGB values of a surface in non-shadow area and $f_H$ denotes the RGB values for the same surface in shadow area. The proportional
coefficients $K_H$ are independent of reflectance and are approximately equal to constants determined by Eq. (\ref{K_H}).
\begin{equation}\label{K_H}
{K_H} = \mathop {\argmin}\limits_{{K_H}} \sum\limits_{\lambda  = 400}^{700} {\left| {{Q_H}(\lambda ) \cdot ({E_{day}}(\lambda ) - {K_H} \cdot {E_{sky}}(\lambda ))} \right|}
\end{equation}
where $E_{day}$ and $E_{sky}$ represent the SPD of daylight and skylight respectively. In our experiment,
this SPD of daylight and skylight were obtained by calculating the mean values of the SPD of daylight and skylight measured by
an Avantes USB 2.0 spectrometer under different sun angles on ten different days in sunny weather. This optimization problem of ${K_H}$ is solved by setting the sampling stepsize of in Eq. (\ref{K_H}) as 5nm and the stepsize of ${K_H}$ as 0.01.

From Eq. (\ref{Linear}), the following equation holds.
\begin{equation}
 \begin{split}
     & \log ({F_R} + 14){\rm{ + }}\log ({F_G} + 14) - {\beta _1} \cdot \log ({F_B} + 14)\\
     & =\log ({f_R} + 14) + \log ({f_G} + 14) - {\beta _1} \cdot \log ({f_B} + 14)
 \end{split}
 \label{gray-invariant1}
\end{equation}
where
\begin{equation}\label{belta1}
  {\beta _1} = \frac{{\log ({K_R}) + \log ({K_G})}}{{\log ({K_B})}}
\end{equation}
For a pixel in an image, Eq. (\ref{gray-invariant1}) represents a shadow invariant. For an arbitrary pixel and its RGB value vector,
${({v_R},{v_G},{v_B})^T}$, $\log ({v_R} + 14){\rm{ + }}\log ({v_G} + 14) - {\beta _1} \cdot \log ({v_B} + 14)$ keeps the same no matter whether
the pixel is on shadow region or not.
Apparently, a grayscale illumination invariant image is obtained.
Two results of this type of illumination invariant image are shown in Fig. \ref{fig:gray}.

\begin{figure}[h]
\centering
\includegraphics[width=0.98\linewidth]{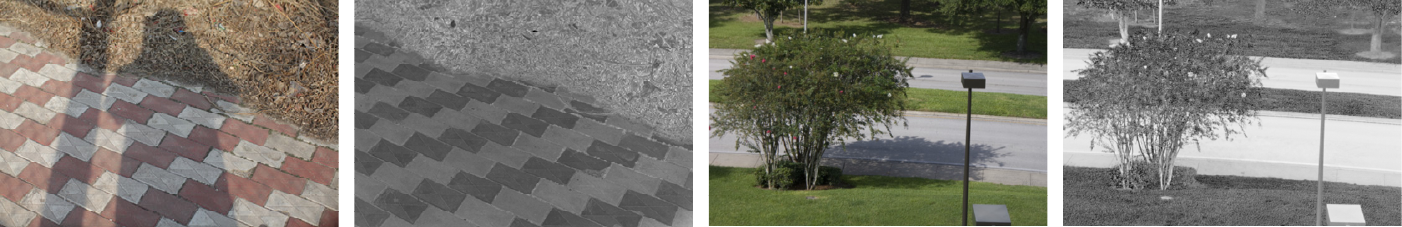}
   \caption{Grayscale illumination invariant image results based on the linear model}
\label{fig:gray}
\end{figure}

\subsubsection{Pixel-wise orthogonal decomposition}
The grayscale illumination invariant image has some limitations in some
applications due to the loss of color and contrast information. We expect to get an illumination (including shadow) invariant image, which can retain
the color information of the original image and appear the same as the original image without shadow. For this purpose, based on the model in
previous section, we deduce a pixel-wise orthogonal decomposition and obtain an illumination invariant image which retains the basic color information
of the original image.

For a RGB value vector ${({v_R},{v_G},{v_B})^T}$, we first define the following grayscale illumination invariant values $I_1$ according to Eq. (\ref{gray-invariant1}):
\begin{equation}\label{Invariant1}
  \begin{array}{rl}
I_1 &= \log ({F_R} + 14){\rm{ + }} \log ({F_G} + 14){\rm{ - }}{\beta _1} \cdot  \log ({F_B} + 14) \\
  &= \log ({f_R} + 14){\rm{ + }} \log ({f_G} + 14){\rm{ - }}{\beta _1} \cdot \log ({f_B} + 14)  \\
  &=  \log ({v_R}+14){\rm{ + }}\log ({v_G} +14)- {\beta _1} \cdot \log ({v_B}+14)
 \end{array}
\end{equation}

Similar to Eq. (\ref{Invariant1}), we can obtain two other grayscale illumination invariant values $I_2$ and $I_3$,
\begin{equation}\label{gray-invariant2}
  \begin{array}{rl}
I_2 &= \log ({F_R} + 14){\rm{ - }}{\beta _2} \cdot \log ({F_G} + 14){\rm{ + }}\log ({F_B} + 14) \\
  &= \log ({f_R} + 14){\rm{ - }}{\beta _2} \cdot \log ({f_G} + 14){\rm{ + }}\log ({f_B} + 14)  \\
  &=  \log ({v_R}+14){\rm{ - }}{\beta _2} \cdot \log ({v_G} +14){\rm{ + }} \log ({v_B}+14)
 \end{array}
\end{equation}
\begin{equation}\label{gray-invariant3}
  \begin{array}{rl}
I_3 &= {\rm{ - }}{\beta _3} \cdot \log ({F_R} + 14) + \log ({F_G} + 14){\rm{ + }}\log ({F_B} + 14) \\
  &= {\rm{ - }}{\beta _3} \cdot \log ({f_R} + 14) + \log ({f_G} + 14){\rm{ + }}\log ({f_B} + 14) \\
  &=  {\rm{ - }}{\beta _3} \cdot \log ({v_R}+14) + \log ({v_G} +14){\rm{ + }}\log ({v_B}+14)
 \end{array}
\end{equation}
where
\begin{equation}\label{belta23}
 {\beta _2} = \frac{{\log ({K_R}) + \log ({K_B})}}{{\log ({K_G})}}, {\beta _3} = \frac{{\log ({K_G}) + \log ({K_B})}}{{\log ({K_R})}}
\end{equation}

For convenience, let $\bm{u} = {({u_R},{u_G},{u_B})^T}$ defines a Log-RGB value vector of a pixel, where ${u_H} = log({v_H} + 14), H = \{R,G,B\}$.
From Eqs. (\ref{gray-invariant1}), (\ref{gray-invariant2}), and (\ref{gray-invariant3}), we have the following linear equations,
\begin{equation}\label{solution1}
 \left\{ {\begin{array}{*{20}{c}}
   {{u_R}{\rm{ + }}{u_G} - {\beta _1} \cdot {u_B} = {I_1}}  \\
   {{u_R} - {\beta _2} \cdot {u_G}{\rm{ + }}{u_B} = {I_2}}  \\
   { - {\beta _3} \cdot {u_R}{\rm{ + }}{u_G}{\rm{ + }}{u_B} = {I_3}}  \\
\end{array}} \right.
\end{equation}
where ${I_i},i = 1,2,3,$ correspond to the invariant values calculated by Eqs. (\ref{Invariant1}), (\ref{gray-invariant2}), and (\ref{gray-invariant3})
respectively. The matrix format of Eq. (\ref{solution1}) is as following:
\begin{equation}\label{maineq1}
A\bm{u} = \bm{I}
\end{equation}
where
 $A = \left[ {\begin{array}{*{20}{c}}
1&1&{ - {\beta _1}}\\
1&{ - {\beta _2}}&1\\
{ - {\beta _3}}&1&1
\end{array}} \right]$, $\bm{u} = {({u_R},{u_G},{u_B})^T}$, and $\bm{I} = {({I_1},{I_2},{I_3})^T}$. According to the definitions and calculations of
${\beta _1},{\beta _2}$, and ${\beta _3}$, we have
\begin{equation}
  2 + {\beta _1} + {\beta _2}{\rm{ + }}{\beta _3} - {\beta _1}{\beta _2}{\beta _3} = 0
\end{equation}
which leads to $rank(A) = 2$. For a given image, from algebraic theory, we know that Eq. (\ref{maineq1}) has infinite number of solutions.
The solution space can be decomposed into a particular solution plus one-dimensional nullspace solutions.
For a pixel on an image, we only know one of the solutions, i.e., pixel Log-RGB value vector. We aim to find one solution of Eq. (\ref{maineq1}) which is
illumination invariant. In the following, we prove that this solution exists and is unique.

From algebraic theory, any solution $\bm{u}$ of Eq. (\ref{maineq1}), can be expressed as following:
\begin{equation}\label{decomposition}
  \bm{u} = \bm{u_s} + \alpha \bm{u_0}
\end{equation}
where $\bm{u_s}$ is an arbitrary particular solution and $\bm{u_0}$ is the normalized free solution of Eq. (\ref{maineq1}), such that
$A\bm{u_0} = 0$ and ${\rm{ }}{\left\| \bm{u_0} \right\|} = 1$, $\alpha  \in R$. The symbol $\left\|  \cdot  \right\|$ denotes ${L^2}$ norm.
One of the free solution is solved in Formula \ref{free_solution},
\begin{equation}\label{free_solution}
\begin{array}{cl}
 \bm{u_0}^\prime& = \left( {{\beta _1}{\beta _2}{\rm{ - }}1,1 + {\beta _1},1 + {\beta _2}} \right) \\
  & = \log ({K_R}{K_G}{K_B}) \cdot (\log ({K_R}),\log ({K_G}),\log ({K_{\rm{B}}})) \\
 \end{array}
\end{equation}
From Formula \ref{free_solution}, the normalized free solution $\bm{u_0}$ can be calculated as following:
\begin{equation}\label{norm_free_solution}
  \bm{u_0} = \frac{1}{{\left\| {\bm{u_0}^\prime} \right\|}}\bm{u_0}^\prime
\end{equation}

Here the free solution has no relationship with the image itself but is determined by matrix $A$, i.e. illumination condition.
Formula \ref{free_solution} further reveals that this free solution is a representation of
the ratio of illuminants in imaging environments.

Given a particular solution $\bm{u_s}$ and the normalized free solution $\bm{u_0}$ of Eq. (\ref{maineq1}), according to Eq. (\ref{decomposition}) and algebraic theory, we define $\bm{u_p}$ as
\begin{equation}
  \left\{ {\begin{array}{*{20}{c}}
   {\bm{u_p} = \bm{u_s} + {\alpha _p}\bm{u_0}}  \\
   {{\alpha _p} =   - \langle \bm{u_s},\bm{u_0}\rangle }  \\
\end{array}} \right.
\end{equation}
It is a particular solution of Eq. (\ref{maineq1}), where $\langle  \cdot , \cdot \rangle $ denotes vector inner product. It can be proved that $\bm{u_p}$,
satisfying $\bm{u_p} \bot \bm{u_0}$, is the unique particular solution of Eq. (\ref{maineq1}) (The proof of the uniqueness of $\bm{u_p}$ is provided
in \textbf{Appendix} A).
For a Log-RGB value vector $\bm{u}$, whether it is on shadow or not, it is the solution of Eq. (\ref{maineq1}) and can be decomposed as
\begin{equation}\label{decomposition1}
  \bm{u} = \bm{u_p} + \alpha \bm{u_0}
\end{equation}
where $\alpha= \langle \bm{u},\bm{u_0}\rangle$ and $\bm{u_p}$, $\bm{u_p} = \bm{u} -  \alpha \bm{u_0}$, is the unique particular solution of Eq. (\ref{maineq1}).
Due to  $\bm{u_p} \bot \bm{u_0}$, we call this process as pixel-wise orthogonal decomposition. For a given pixel, no matter how different the values of the pixel
are with different illumination conditions, this pixel-wise orthogonal decomposition will yield a unique particular solution $\bm{u_p}$.

From the deducing procedure above and the orthogonal decomposition, we know that $\bm{u_0}$, representation of illuminants ratio, is only related with
${({\beta _1},{\beta _2},{\beta _3})^T}$ and $\bm{u_p}$, perpendicular to $\bm{u_0}$, is illumination invariant.
It means that for a pixel with Log-RGB value vector $\bm{u}$, no matter whether it is on shadow or not, $\bm{u_p}$ is invariant and only $\alpha $
reflects the variation of pixel RGB values caused by shadow or different illuminants.
The different solutions caused by different $\alpha$ component
in Eq. (\ref{decomposition1}) can be regarded as Log-RGB values of a pixel on different illumination conditions.
Therefore, given an image under multiple illumination conditions, for each pixel with Log-RGB value vector $\bm{u}$ of this image,
we can directly calculate its illumination invariant vector $\bm{u_p}$ as
\begin{equation}\label{decompose up}
  \left\{ {\begin{array}{*{20}{c}}
   {\bm{u_p} = \bm{u} + {\alpha _p}\bm{u_0}}  \\
   {{\alpha _p} =   - \langle \bm{u},\bm{u_0}\rangle }  \\
\end{array}} \right.
\end{equation}
By exponential transform on $\bm{u_p}$ for each pixel, we can get an illumination invariant image which retains basic color information of the original image in RGB color space.

\begin{figure}[h]
\centering
\includegraphics[width=0.58\linewidth]{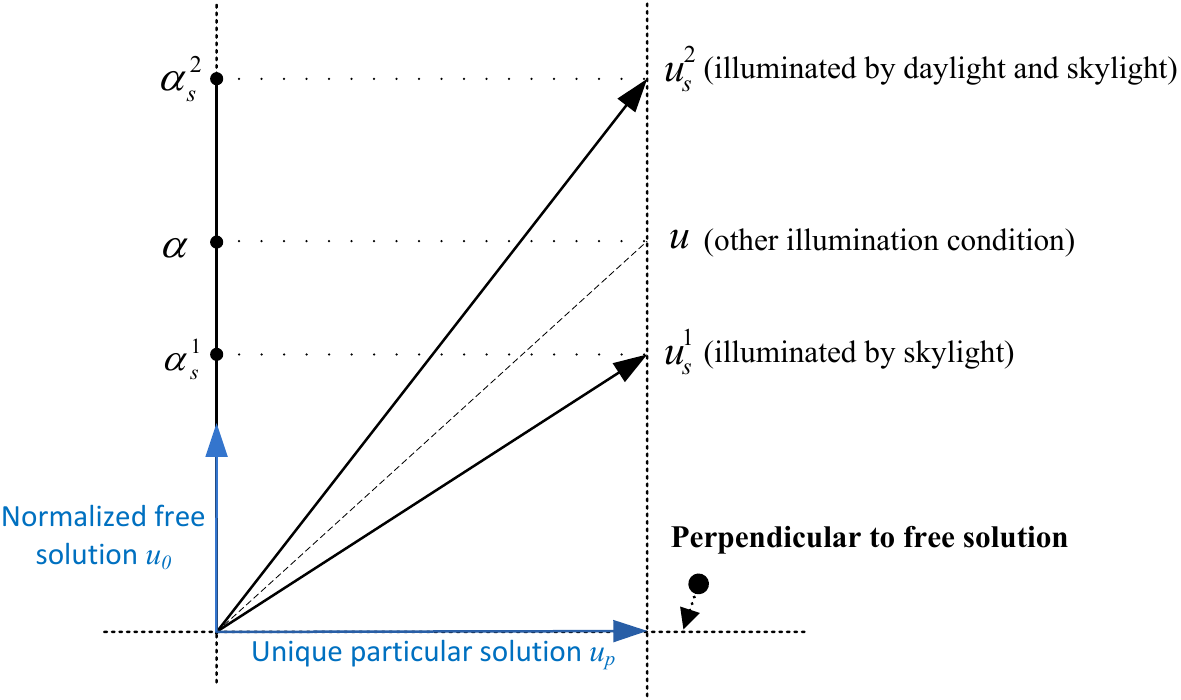}
   \caption{An illustration of pixel-wise orthogonal decomposition.}
\label{fig:geometry}
\end{figure}
This pixel-wise orthogonal decomposition process can be shown more clearly from the illustration in Fig. \ref{fig:geometry}.
For a pixel, $\bm{u_s}^1$ and $\bm{u_s}^2$ denote its log-RGB vector within shadow and log-RGB vector without shadow respectively, which are the solutions of Eq. (\ref{maineq1}). Here we denote any solution of Eq. (\ref{maineq1}) by vector $\bm{u}$.
According to our orthogonal decomposition Eqs. (\ref{free_solution}), (\ref{norm_free_solution}), and (\ref{decomposition1}), $\bm{u_s}^1$,
$\bm{u_s}^2$, and $\bm{u}$ can be projected along the vector $\bm{u_0}$
into an specific solution of Eq. (\ref{decomposition}), $\bm{u_p}$, which is perpendicular to $\bm{u_0}$.
Therefore, for a pixel on an image, it means that, no matter how different the values of the pixel are with different illumination conditions (within shadow,
without shadow or other illuminated), we can get an unique invariant $\bm{u_p}$ by our orthogonal decomposition.

\subsection{Analysis of Parameters ${\beta _1},{\beta _2},{\beta _3}$ and color illumination invariant image}
The estimation of parameters ${\beta _1},{\beta _2},{\beta _3}$ is important in determining whether our pixel-wise orthogonal decomposition
works well or not. Luckily, from Eqs. (\ref{belta1}), (\ref{belta23}), and (\ref{K_H}), we know that these parameters are reflectance irrelevant
and only determined by the illumination condition (i.e, the SPD of daylight and skylight), such as zenith angle (time) and aerosol (weather). In clear weather that shadows mostly take place,
these parameters are mainly determined by zenith angle.
The SPD can be measured by spectrometer or calculated in
atmosphere optics (software like MODTRAN and SMARTS2) with zenith angle and aerosol (for weathers) in advance.
We have calculated parameters ${\beta_1},{\beta_2},{\beta_3}$ in sunny weather at different sun angles from 9:00 AM till 4:00 PM
in our city and some of the values are shown in Tab.\ref{table:belta}. The mean values in Tab.\ref{table:belta} are calculated by the mean SPD of daylight and skylight at
$20^\circ$ to $70^\circ$.
From Tab.\ref{table:belta}, we notice that the values of parameters ${\beta_1},{\beta_2},{\beta_3}$ from sun angle $20^\circ$ to sun angle $80^\circ$ are quite stable.
In real applications, the sun angle (time) is known and the parameters can then be determined. In the condition that we
have no information about at what sun angles the pictures were taken, the stability of parameters shows that the mean values
for ${\beta_1},{\beta_2},{\beta_3}$ at $20^\circ$ to $70^\circ$ sun angles can be used for most situations in our decomposition,
except some images taken at twilight, i.e., at sunrise or sunset, when sun angle is less than $10^\circ$. Also, for more precise parameter values, an automatic
entropy minimization method similar to Finlayson et al. \cite{finlayson2009} is employed.
\begin{table*}[!t]
\centering
\caption{Parameters (PRMs) from representative sun angles}
\label{table:belta}
\begin{tabular}{c||c|c|c|c|c|c|c|c}
\hline
\backslashbox{PRMs}{Angle}  & $20^\circ$ & $30^\circ$ &  $40^\circ$ & $50^\circ$ & $60^\circ$ & $70^\circ$ & $80^\circ$ & Mean \\
\hline\hline
$\beta_1$  & 2.353 &  2.321&2.299 & 2.371 &  2.648 & 2.520 & 2.473 &2.557  \\
$\beta_2$ &1.963& 1.963 &1.977&1.982&1.925&1.996 & 1.985 &1.889\\
$\beta_3$ &1.745& 1.767&1.770&1.716&1.604&1.617& 1.652 &1.682\\
\hline
\end{tabular}
\end{table*}
In addition to the factors (zenith angle and aerosol) mentioned above that may affect parameters ${\beta_1},{\beta_2},{\beta_3}$,
some other aspects are also worthy of attention. For example, cloud and fog will affect the SPD of daylight and skylight and further affect the parameters
${\beta_1},{\beta_2},{\beta_3}$ in our algorithm.
From clear weather to cloudy, as the clouds increase (aerosol increases), shadows vary from strong to weak.
When shadows are weak, the error of the color illumination invariant results caused by parameters (${\beta_1},{\beta_2},{\beta_3}$) deviation from cloud is also weaker.
Therefore, the influence caused by the cloud distribution would be small and can be corrected by automatic entropy minimization \cite{finlayson2009}.
This similar situation also goes for foggy weather.
\begin{figure}[h]
\centering
\includegraphics[width=0.7\linewidth]{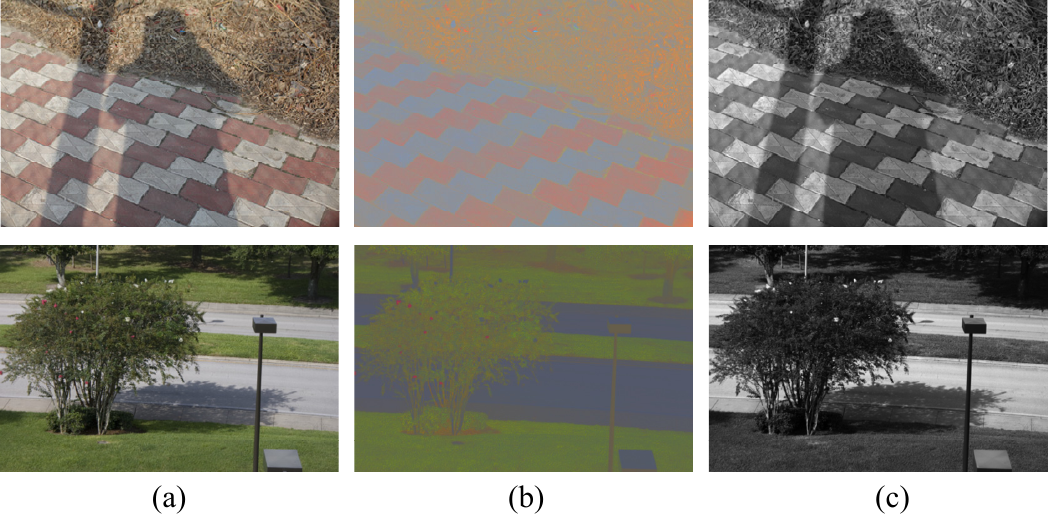}
   \caption{Pixel-wise orthogonal decomposition on images. Left: Original image. Middle: color illumination invariant image. Right: $\alpha$ information.}
\label{fig:par}
\end{figure}

The above analysis shows that our pixel-wise orthogonal decomposition can work well on most situations. Our experimental results on many images in different
scenes and reflectance also support this analysis. Two original images, their color illumination invariant images and $\alpha$ information are presented in
Fig. \ref{fig:par}. The shadows disappear and the main color information is maintained in our color illumination invariant images.

Compared with prior works \cite{drew2003,finlayson2009,maxwell2008}, our 2-D color illumination invariant image has three main advantages:

(1) Our
2-D color illumination invariant image is a true 2-D image
which can be automatically obtained through a simple pixel-wise orthogonal decomposition. In \cite{finlayson2009,drew2003}, Finlayson and Drew obtained
a 2-D chromaticity image by projecting the 1-D chromaticity back to 2-D via keeping the vector components. In \cite{maxwell2008},  Maxwell et al. stated that this 2-D chromaticity image is inherently one
dimensional. Applying narrowband camera sensors in deduction, Maxwell et al. \cite{maxwell2008} successfully obtained a truly 2-D color illumination invariant image by projecting the colors (in log-RGB) onto a plane based on Shafer\textquoteright s dichromatic reflection model \cite{shafer1985}. Their 2-D color illumination invariant image is obtained either from a user input for projecting normal vector or an 2-D entropy searching. This user input or 2-D entropy search of the angle of the normal vector in a three dimensional space may bring some difficulties in real applications, especially for some images with small fragment shadows and complex reflectance. While our 2-D color illumination invariant image can be directly obtained by measuring the SPD of daylight and skylight or
simply giving the information about at what sun angles the pictures were taken (the time when the pictures were taken).
Even when these information is unknown to the users, a mean value of SPD information is suitable for most situations. For the correction of the errors caused by the mean values, a simple 1-D entropy searching is used to get optimized parameter values among all the given parameters that be measured in different sun angles in advance.

(2) Based on a rigorous mathematical deducing, our 2-D color illumination invariant image has an explicit and simple mathematical expression. This expression and fast pixel-wise calculation of our 2-D color illumination invariant image can be applied directly to the real-time applications.

(3) From the view of atmospheric transmittance
effects, we do not apply the assumption of Planckian lighting and narrowband camera sensors in our derivation, which may be more practical in real situation.
\section{Color restoration and shadow-free image}
By the orthogonal decomposition on an image, some image color information gets lost (Fig. \ref{fig:par}). As shown in Tab.\ref{table:belta} and Eqs. (\ref{free_solution}) and (\ref{norm_free_solution}), the directions of free solution in different sun
angles (from sun angle $20^\circ$ to sun angle $80^\circ$) are all near to $(1,1,1)^T$. From the orthogonal decomposition equation, we have, if $\bm{u} \approx \alpha \bm{u_0}$,
the illumination invariant $\bm{u_p} \approx {(0,0,0)^T}$. It implies that the closer to the neutral surface on RGB space a pixel RGB value vector is, the more color information
is got lost. To restore this loss, we present a color restoration method to generate a shadow-free image by correcting color vectors on the illumination invariant image which are
near RGB color neutral surface.

Let $\bm{v}(x,y) = {({v_R}(x,y),{v_G}(x,y),{v_B}(x,y))^T}$ be an RGB image, where $x = 1,2,...,M; y = 1,2,...,N$; $M$ and $N$ denote the image width and height respectively. We define
\begin{equation}
\begin{array}{l}
\bm{u}(x,y) = (\log ({v_R}(x,y) + 14),\log ({v_G}(x,y) + 14),\\
\begin{array}{*{20}{c}}
{\begin{array}{*{20}{c}}
{\begin{array}{*{20}{c}}
{\begin{array}{*{20}{c}}
{\begin{array}{*{20}{c}}
{}&{}
\end{array}}&{}
\end{array}}&{}
\end{array}}&{}
\end{array}}&{}
\end{array}\begin{array}{*{20}{c}}
{}&{}
\end{array}\log ({v_B}(x,y) + 14){)^T}
\end{array}
\end{equation}
\begin{algorithm}[b]
  \caption{Algorithm pipeline: Pixel-wise orthogonal decomposition for shadow-free image}
  \label{algorithm1}
  \begin{algorithmic}[0]
\State  \hspace{-12pt} \textbf{Input:} Original image $I$, Coefficient $K_H$;
    \State \hspace{-12pt} \textbf{Output:} Shadow-free image  $O$;
    \State \hspace{-12pt} \textbf{Some notations:} \\
$\bm{u}$, $\bm{u_c}$, $\bm{u_p}$: Image in the Log-RGB color space;   \\
$\bm{u_p}^{RGB}$, $\bm{u_c}^{RGB}$: Image in the RGB color space, the superscript $RGB$ indicates image in the RGB color space;   \\
$\bm{u_p}^{Lab}$, $\bm{u_c}^{Lab}$, $\bm{u_f}^{Lab}$: Image in the Lab color space, the subscript $Lab$ indicates images in the Lab color space;
     \State \hspace{-12pt} \textbf{Algorithm:}
  \end{algorithmic}
  \begin{algorithmic}[1]
  \vspace{-1mm}
    \State Perform a logarithmic transformation on $I$ to get the Log-RGB image $\bm{u}$;
    \State Make a pixel-wise orthogonal decomposition on $\bm{u}$ according to Eq. (\ref{decompose up}) to get the color illumination invariant values $\bm{u_p}$, where we have
    $\bm{u_p} = \bm{u} -  < \bm{u},\bm{u_0} > \bm{u_0}$;
    \State Calculate the parameter vector $T$ according to Eqs. (\ref{S}) and (\ref{T}). This parameter vector is used to correct the colors in the original image that lies close to the neutral surface;
    \State Perform color correction on $\bm{u_p}$ according to Eq. (\ref{color-restoration}) to get $\bm{u_c}$;
    \State Transfer $\bm{u_p}$ and $\bm{u_c}$ back to the RGB color space by an exponentiation transformation
    and get $\bm{u_p}^{RGB}$ and $\bm{u_c}^{RGB}$ respectively. This image $\bm{u_p}^{RGB}$ has almost the same chrominance as original image
    and image $\bm{u_c}^{RGB}$ has almost the same luminance as original image;
    \State Convert $\bm{u_p}^{RGB}$ and $\bm{u_c}^{RGB}$ from RGB color space into Lab color space and get $\bm{u_p}^{Lab}$ and $\bm{u_c}^{Lab}$;
    \State Combine the luminance component of $\bm{u_c}^{Lab}$ and chrominance components of $\bm{u_p}^{Lab}$ to obtain the
    shadow-free image $\bm{u_f}^{Lab}$ in Lab space according to Eqs. (\ref{labcolor}) and (\ref{labunion});
    \State Convert $\bm{u_f}^{Lab}$ to the RGB color space to get the final shadow-free image $O$;
  \end{algorithmic}
\end{algorithm}
\begin{figure*}
\begin{center}
  \centering
  \includegraphics[width=0.9\textwidth,height=0.63\textwidth]{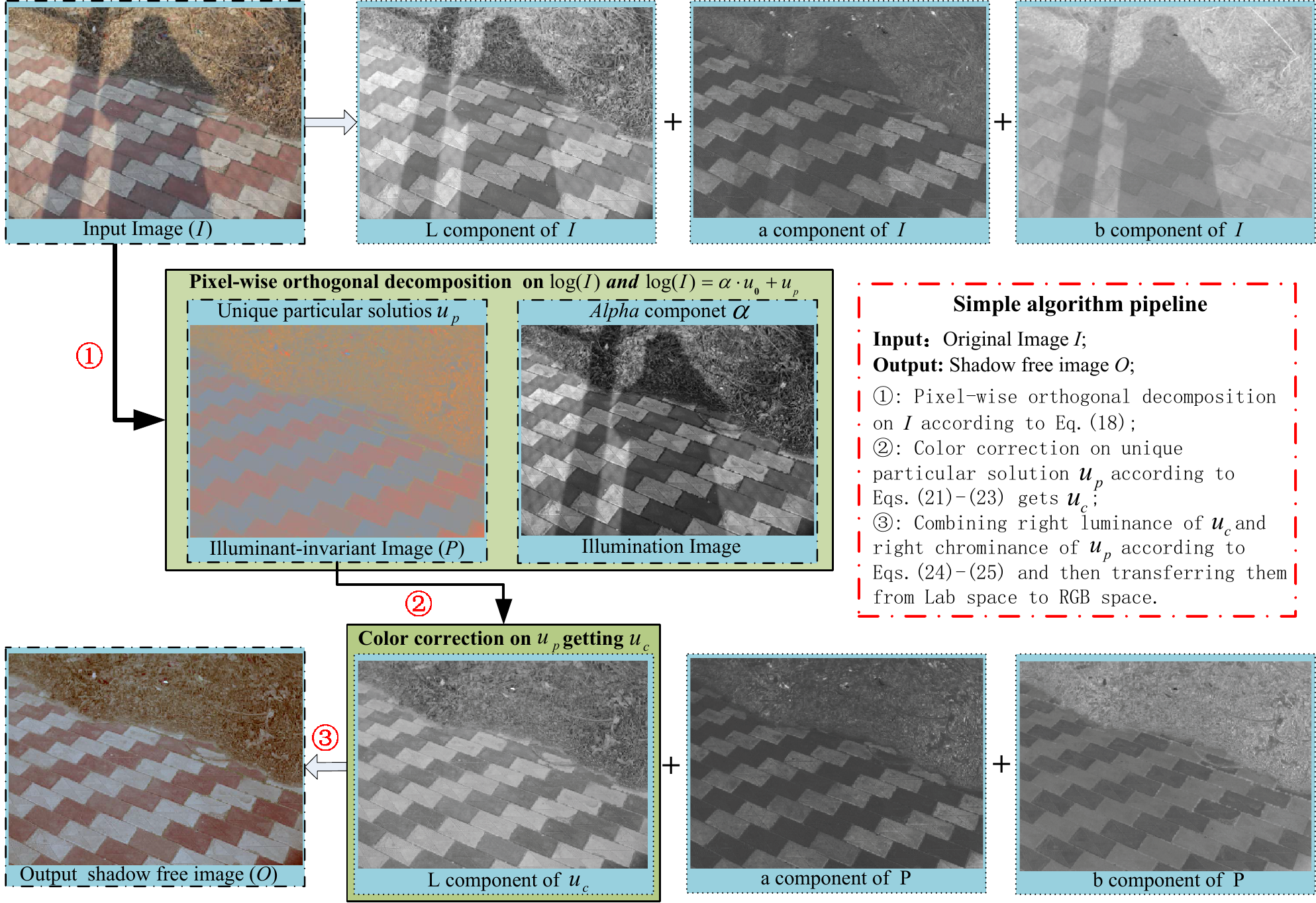}
    \caption{The stages of our algorithm implemented on a shadowed image. Note that the arrow in the first row represents the color space conversion,
    that is converting from RGB space to Lab space. These three components have no actual use in our algorithm. They are only used to act as the
    reference for the Lab component of our shadow-free image $O$.}
  \label{fig:flow_chart}
  \end{center}
\end{figure*}
With the orthogonal decomposition on $\bm{u}$ (to see Eq. (\ref{decomposition1})), we obtain
\begin{equation}
 \bm{u}(x,y) = \alpha (x,y)\bm{u_0} + \bm{u_p}(x,y)
\end{equation}
where $\bm{u_p}(x,y)$ is illumination invariant on pixel $(x,y)$, $\alpha (x,y) \in R$. For $\bm{u}$, we define the following pixel set, which
indicates pixels that are near RGB color neutral surface,
\begin{equation}\label{S}
\begin{split}
\begin{array}{l}
 S = \{ (x,y)|\left\| {\frac{{\bm{u}(x,y)}}{{\left\| {\bm{u}(x,y)} \right\|}} - \bm{u_0}} \right\| \le \varepsilon ; \\
 \begin{array}{*{20}{c}}
   {\begin{array}{*{20}{c}}
   {} & {}  \\
\end{array}} &\ {} & {}  \\
\end{array}x = 1,2,..,M;y = 1,2,...,N\}  \\
 \end{array}
\end{split}
\end{equation}
where parameter $\varepsilon$ is set to 0.15 empirically.
We then calculate the following parameter vector,
\begin{equation}\label{T}
  \bm{T} = \frac{1}{G}\sum\limits_{(x,y) \in S} {\left( {\bm{u_0} - \frac{{\bm{u}(x,y)}}{{\left\| {\bm{u}(x,y)} \right\|}}} \right)}
\end{equation}
where $G$ is the number of the pixels in set $S$. This parameter vector is used to measure the average deviation of the pixel vectors of image $\bm{u}$ in set $S$ from the vector $\bm{u_0}$. Then the color correction on $\bm{u_p}$ is calculated as following:
\begin{equation}\label{color-restoration}
\begin{array}{cl}
 \bm{u_c}(x,y) = & \parallel \bm{u_p}(x,y)\parallel  \cdot  \\
 & \left( {\frac{{\bm{u_p}(x,y)}}{{\parallel \bm{u_p}(x,y)\parallel }} + \frac{1}{{\kappa {{\left\| {\frac{{\bm{u}(x,y)}}{{\left\| {\bm{u}(x,y)} \right\|}} - \bm{u_0}} \right\|}^3} + 1}}T} \right) \\
 \end{array}
\end{equation}
From the formula above, if $\bm{u}(x,y) = \alpha (x,y)\bm{u_0}$, we have $\bm{u_c}(x,y) = \bm{u_p}(x,y) + T$.
This color correction aims to correct the color for pixels in set $S$. The role of function
${\frac{1}{{\kappa {{\left\| {\frac{{\bm{u}(x,y)}}{{\left\| {\bm{u}(x,y)} \right\|}} - \bm{u_0}} \right\|}^3} + 1}}}$ in Eq. (\ref{color-restoration})
is used to minimize the impact of this correction on pixels not in set $S$ and keep the correction smooth. The parameter $\kappa$ is set to 0.02 empirically.
With exponentiation, we transform $\bm{u_c}$ and $\bm{u_p}$ into RGB space and get $\bm{u_c}^{RGB}$ and $\bm{u_p}^{RGB}$ respectively, where image $\bm{u_p}^{RGB}$ has almost the
same chrominance as original image and image $\bm{u_c}^{RGB}$ has almost the same luminance as original image. This is verified by experimental results
on many images in different scenes and reflectance.

Then a shadow-free image is obtained by transferring the right luminance component of $\bm{u_c}^{RGB}$ to the color illumination invariant image.
We adopt Lab color space (L, the luminance component; a, b, two chrominance components) for this transfer. First we convert $\bm{u_c}^{RGB}$ and $\bm{u_p}^{RGB}$
from RGB space into Lab space and get $\bm{u_c}^{Lab}$ and $\bm{u_p}^{Lab}$ respectively,
\begin{equation}\label{labcolor}
\left\{ \begin{array}{l}
\bm{u_c}^{Lab}(x,y) = {({L_c}(x,y),{a_c}(x,y),{b_c}(x,y))^T}\\
\bm{u_p}^{Lab}(x,y) = {({L_p}(x,y),{a_p}(x,y),{b_p}(x,y))^T}
\end{array} \right.
\end{equation}
Then the final shadow-free image $\bm{u_f}^{Lab}$ in Lab space is obtained as following,
\begin{equation}\label{labunion}
  \bm{u_f}^{Lab}(x,y) = {({L_c}(x,y),{a_p}(x,y),{b_p}(x,y))^T}
\end{equation}

An overview of our shadow-free image generation pipeline is presented in Algorithm \ref{algorithm1}. Fig. \ref{fig:flow_chart} shows the stages of the algorithm
implemented on a shadowed image. More shadow-free images can be seen in the experiment section and $\mathbf{Appendix}$ B. From the results, we can see that our method for shadow-free image effectively removes shadows from images while retaining the color and
texture information of the original image. It is worth noting that,
unlike shadow removal method that only recovers pixels in the shadow region, our shadow-free image, without the process
of shadow detection, is operating on the entire image. In our algorithm, all the brighter pixels
and darker pixels are pulled to the same illumination intensity and it may reduce the contrast of the image. Therefore, our shadow-free image may sometimes looks
a bit dull compared with the original image.

\section{Experiments}
In our experiment, we evaluate the results of color illumination invariant image and the relevant shadow-free image on more outdoor real images. These
images consist of common scenes or objects under a variety of illumination conditions without shadow or with different types of shadows, such as soft
shadows, cast shadows, and self shadows. We first compare our method with two state-of-the-art full-automatic methods (a statistical learning
method \cite{guo2012} and a physical-based method \cite{yang2012} (Figs. \ref{fig:zoom1} and \ref{fig:results3})) and a shadow removal method \cite{arbel2007texture,arbel2011} with human intervention (Fig. \ref{fig:removal}), respectively. Then the results obtained by our method and Shen\textquoteright s method \cite{shen2011} on images under different illumination conditions show that our color illumination invariant image is much closer to the intrinsic
feature than Shen\textquoteright s from the view of yielding an identical reflectivity result regardless of illumination condition (Fig. \ref{fig:intrinsic}).

\subsection{Comparison of shadow-free images}

\textbf{Comparison with full-automatic methods.} In our experiment, we compare our method with two state-of-the-art methods (\hspace{-0.35pt}\cite{guo2012} and \cite{yang2012}) in terms of shadow-free results and running time. These two
methods are based on shadow feature with statistical learning and physical model respectively. These methods are executed with the codes published by
the authors of \cite{guo2012} and \cite{yang2012}.
\begin{figure*}
  \centering
  \begin{center}
  \includegraphics[width=0.88\textwidth]{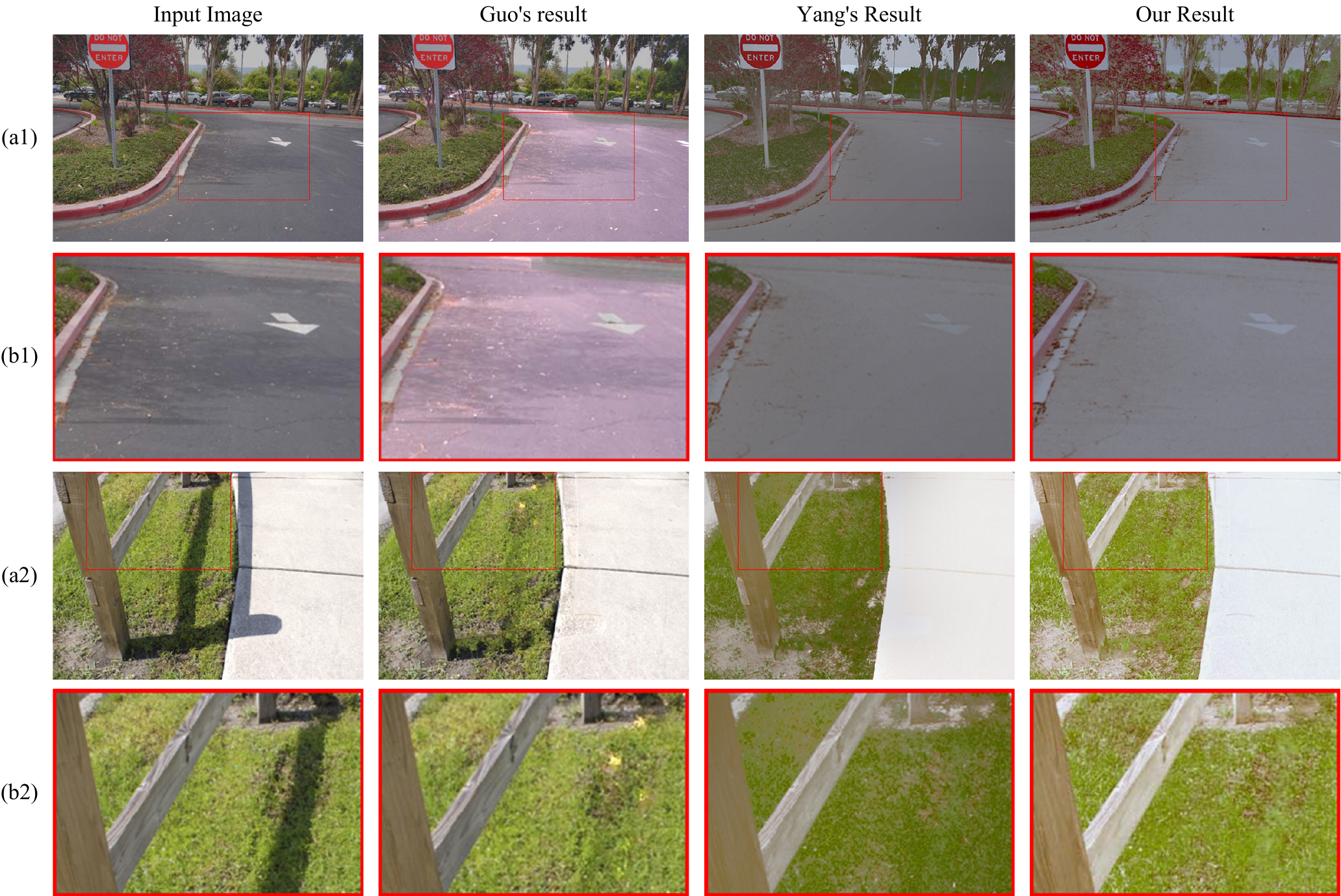}
    \caption{Comparisons with two state-of-the-art methods (Guo et al. \cite{guo2012} and Yang et al. \cite{yang2012}). (b1), (b2) are close-ups of the red rectangles in (a1), (a2).}
  \label{fig:zoom1}
    \end{center}
\end{figure*}
\begin{figure*}
  \centering
  \includegraphics[width=0.88\textwidth]{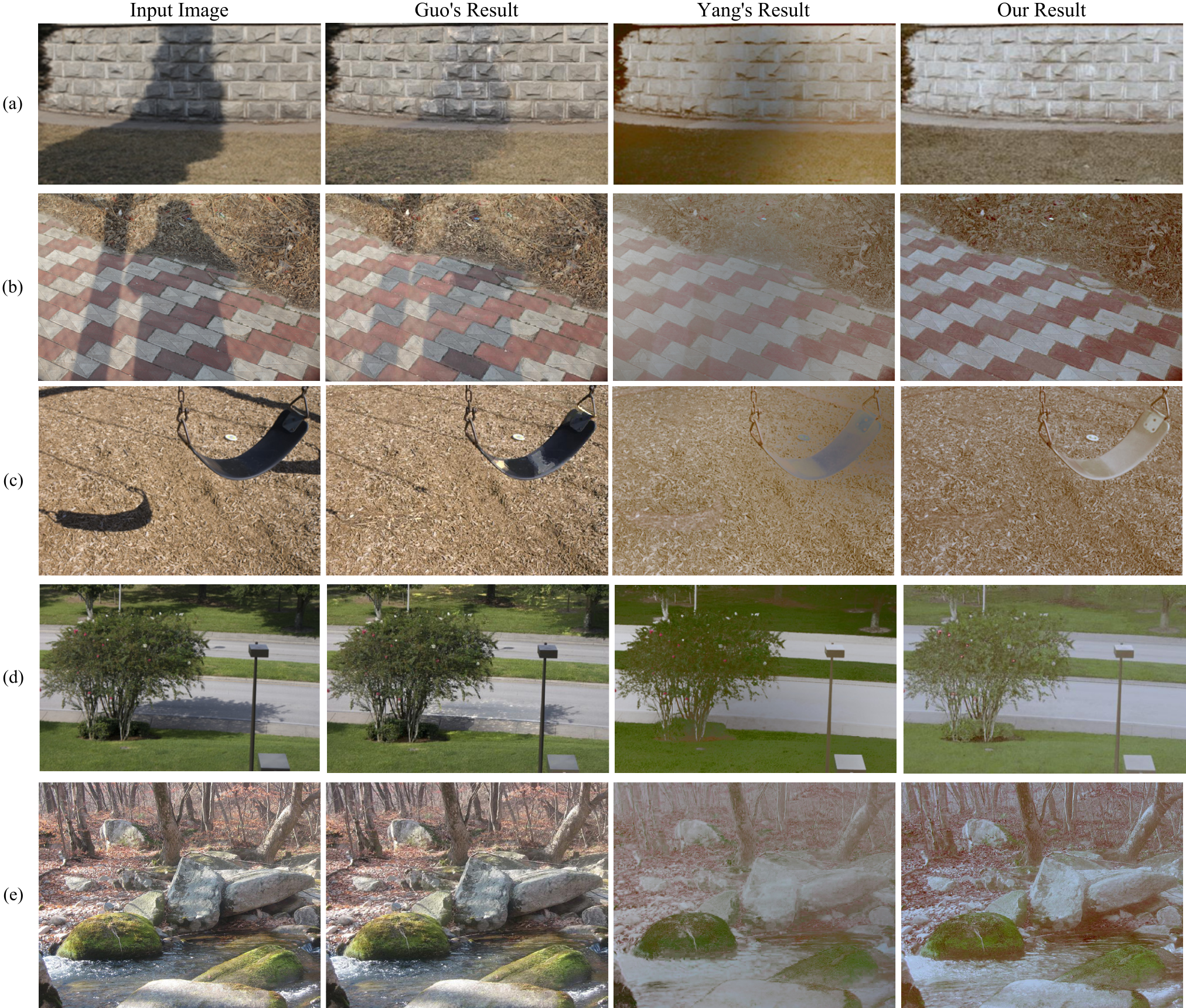}
    \caption{Comparisons with two state-of-the-art methods (Guo et al. \cite{guo2012} and Yang et al. \cite{yang2012}) on more images.}
  \label{fig:results3}
\end{figure*}

In Fig. \ref{fig:zoom1}, we show two shadow-free image results. For the first image (Fig. \ref{fig:zoom1}(a1)), both our method and Yang\textquoteright s
method \cite{yang2012} remove the soft-shadow of leaves successfully. However, Guo\textquoteright s method \cite{guo2012} fails to recover a shadow-free
image due to the wrong detection on shadows. Due to lack of robust shadow features, this shadow feature and learning based method [9] are not always effective,
especially in soft shadow region or some complex scenarios (e.g., Figs. \ref{fig:results3}(d) and \ref{fig:results3}(e)). Close-ups of the first image (Fig. \ref{fig:zoom1}(b1)) show that, compared with our result,
Yang\textquoteright s method \cite{yang2012} fails to retain the white arrow on the ground. This is because that Yang\textquoteright s method is not solid for
neutral regions when its neighbor regions are also neutral \cite{yang2012}.
For the second image (Fig. \ref{fig:zoom1}(a2)), our method gets a shadow-free image with texture preserved, while Guo\textquoteright s method \cite{guo2012}
fails to retain the original texture even when it detects the shadows successfully due to the inappropriate relighting on pixels. Specially, some pixels are relighted
serious wrongly, such that it looks like that some objects that do not exist in the original image appear (e.g., the tiny yellow flowers shown in
Guo\textquoteright s shadow-free image in Fig. \ref{fig:zoom1}(b2)).
Close-ups of the red rectangle (Fig. \ref{fig:zoom1}(b2)) show that some of the
grasses are blurred in Yang\textquoteright s method \cite{yang2012} due to bilateral filtering, while our method, without any filtering operation, retains the texture well.

Some other comparison results are shown in Fig. \ref{fig:results3}. The images in Fig. \ref{fig:results3} show that our method removes shadows effectively from a single
image with texture well-preserved. However, Guo\textquoteright s method \cite{guo2012} fails to recover a shadow-free image either due to the wrong detection on shadow
regions (Figs. \ref{fig:results3}(b), \ref{fig:results3}(d), and \ref{fig:results3}(e)) or the inappropriate relighting on pixels (Figs. \ref{fig:results3}(a) and \ref{fig:results3}(c)).
Due to the complex of image reflectance and the irregular distribution of shadows in Fig. \ref{fig:results3}(e), none of the shadow regions are correctly detected in Guo\textquoteright s
shadow-free image. For images with dark shadows or in which shadows pervade in a scenario (Figs. \ref{fig:results3}(a)--\ref{fig:results3}(e)), Yang\textquoteright s method \cite{guo2012} blurs the shadow-free images. For example,
the clear water in the original image in Fig. \ref{fig:results3}(e) is seriously blurred in Yang\textquoteright s method.
Some small and tiny objects such as leaves are also removed in Yang\textquoteright s results.
\begin{table*}
\centering
\caption{Comparison of running time of the three methods on images shown in Fig. \ref{fig:results3} (measured in seconds)}
\label{table:time}
\begin{tabular}{l||c|c|c|c|c}
\hline
Images & a & b & c & d & e \\
 (Resolution) & (287*163) & (682*453)& (496*334) & (498*332) &(500*475) \\
\hline\hline
Guo\textquoteright s (Matlab)  &	36.36 &	138.63  &  158.2786 & 286.5894 &6.48e+003\\
Ours (Matlab) &	 0.13 &	0.78 &0.384&0.386&0.546\\
\hdashline
Yang\textquoteright s (C++)  &	0.171 &	0.94  & 0.602 &0.614&0.75\\
Ours (C++) &	 0.041 &	0.187 & 0.119 & 0.092 &0.157\\
\hline
\end{tabular}
\end{table*}

Figs. \ref{fig:zoom1} and \ref{fig:results3} verify that our method removes shadows more effectively from a single image, and performs steadily on images with complex outdoor
scenes. Our shadow-free image also better preserves the original texture compared to the other two methods.
In running time, our method also outperforms Guo\textquoteright s \cite{guo2012} and Yang\textquoteright s \cite{yang2012}. In Tab.\ref{table:time},
we compare the running time of these three methods on the images shown in Fig. \ref{fig:results3}. All experiments were conducted
on a computer with Intel (R) Core (TM) i3-2120 3.3GHz CPU and 2G RAM memory. The source codes that Guo and Yang provided are implemented in Matlab and C++ respectively. For better comparison, we implement our method both in Matlab and C++. Without any statistical learning or complex filtering operation, our method is
faster than Guo\textquoteright s \cite{guo2012} and Yang\textquoteright s \cite{yang2012}.
Note that, when the reflectance of image becomes complex and
shadows on the image get intricate, it will be extremely hard for Guo\textquoteright s method to detect shadow regions and then remove them,
and the running time rose up to 6.4827e+003 second, almost 2 hours in Fig. \ref{fig:results3}(e). All these tested examples and quantitative results show that our
method performs better in terms of accuracy, consistency, and efficiency than the other two methods on shadow images.

\textbf{Comparison with shadow removal method with human intervention.} Apart from previous comparisons with full-automatic methods for shadow-free image, in this section, we also compare our method
with a shadow removal method \cite{arbel2007texture,arbel2011}. In Arbel\textquoteright s method, the shadow mask is firstly extracted based on region growing and SVM learning with the user inputs (Figs. \ref{fig:removal}(b) and \ref{fig:removal}(c)).
Then the shadow-free image is constructed by multiplying a constant scalar to
cancel the effect of shadows in umbra regions, while on penumbra regions, the shadows are removed based on the intensity surface construction.
\begin{figure*}[h]
  \centering
  \includegraphics[width=0.89\textwidth, height=0.33\textwidth]{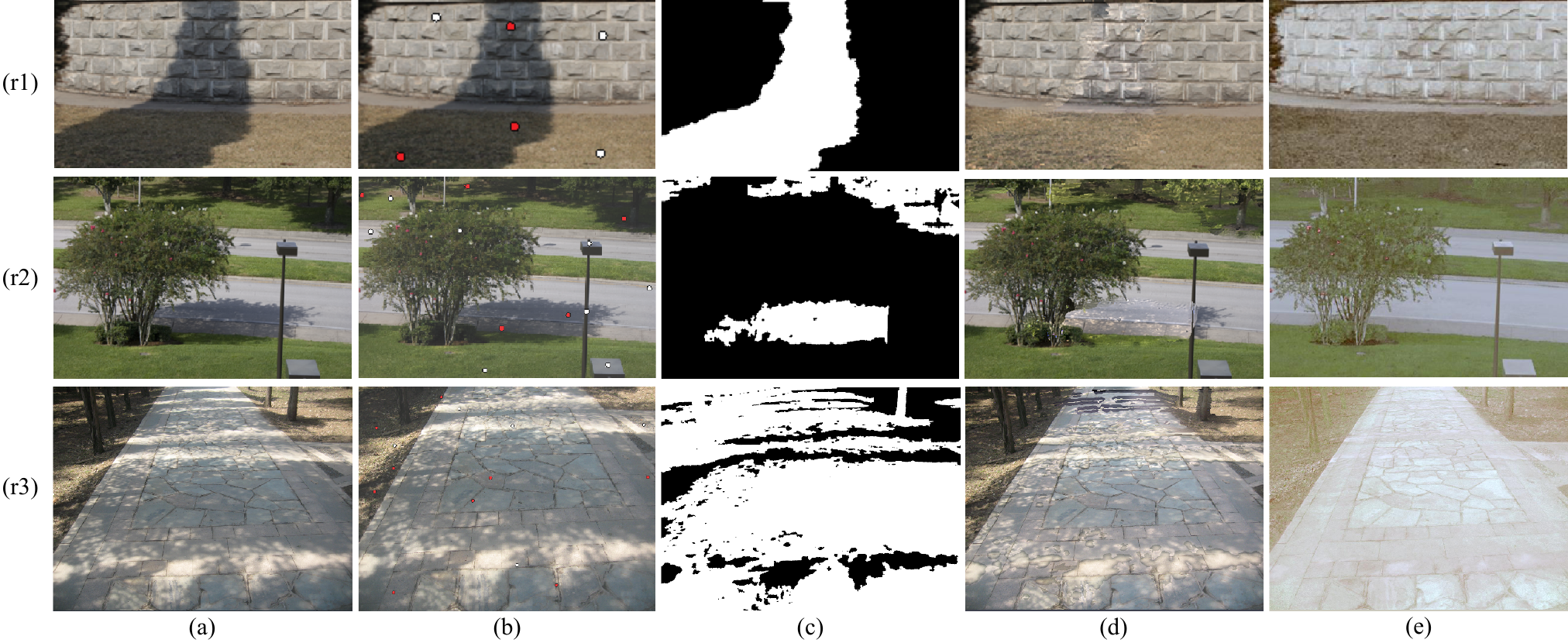}
    \caption{Comparison with Arbel\textquoteright s shadow removal method \cite{arbel2007texture,arbel2011}. (a) Original image. (b) Examples of user-guided shadow
    and non-shadow observations in Arbel\textquoteright s method denoted by red circles and white circles respectively. (c) The extracted shadow masks in Arbel\textquoteright s method. (d) Arbel\textquoteright s shadow-free image. (e) Our shadow-free image.}
  \label{fig:removal}
\end{figure*}

Fig. \ref{fig:removal} gives the comparison of our method with Arbel\textquoteright s
method on shadow-free image.
Fig. \ref{fig:removal}(r1,e) shows that even though Arbel\textquoteright s method effectively removes shadows from
image with dark shadows, the texture in the original shadow area is not consistent
with that in the non-shadow area. Figs. \ref{fig:removal}(r2,d) and \ref{fig:removal}(r3,d) further show that
Arbel\textquoteright s method fails to reconstruct a shadow-free image from images with complex shadows. In these complex scenarios, the constant scalar in the umbra area
in Arbel\textquoteright s method no longer works. These wrong constant scalar relights the shadow regions either excessively or insufficiently.
As a contrast, our method can
not only remove the shadows, but also keep the texture consistency between the
shadow and non-shadow area as shown in Figs. \ref{fig:removal}(r1,d), \ref{fig:removal}(r2,e), and \ref{fig:removal}(r3,e).

\subsection{The ``intrinsic'' of color illumination invariant image}
Both Guo\textquoteright s \cite{guo2012} method, Yang\textquoteright s \cite{yang2012} method, and Arbel\textquoteright s method \cite{arbel2007texture,arbel2011} are designed for removing shadows from an image.
They cannot yield an identical reflectivity result regardless of illumination condition. In this section we will show that our color illumination invariant
image has intrinsic feature. We apply our algorithm to a series of images taken in outdoor scenes at different times on a sunny day to testify the intrinsic feature of our color illumination invariant image.

Shown in Fig. \ref{fig:intrinsic}(r1), these tested images were taken under four different illumination conditions,
one under total daylight (Fig. \ref{fig:intrinsic}(r1,a)), two under partly daylight and partly skylight
(Figs. \ref{fig:intrinsic}(r1,b) and \ref{fig:intrinsic}(r1,c)), and one under total skylight (Fig. \ref{fig:intrinsic}(r1,d)). Taking the image under total daylight as reference image, we compare
the differences between the corresponding processed results by different methods. For an intrinsic image derivation algorithm, the difference of the reference
intrinsic image and other three intrinsic images should be approximately the same or at least much closer to each other than the original image sequences.
In our experiment, we calculated the differences both for our color illumination invariant images and shadow-free images. As a comparison, we also present
the intrinsic images obtained by Shen et al. \cite{shen2011} and the corresponding differences. The
root mean square error (RMSE) is used as the measurement. Since being compressed, the range of our color illumination is much
lower compared with original images and shadow-free images. Therefore, besides RMSE, we also adopt Relative Error
($\frac{{{\rm{RMSE}}}}{{{\rm{ \text{The Range of Mearsued Image}}}}}$) as our measurement.
All these measured images are 24 bit RGB images.
\begin{figure*}[h]
  \centering
\begin{center}
  \includegraphics[width=0.9\textwidth,height=0.58\textwidth]{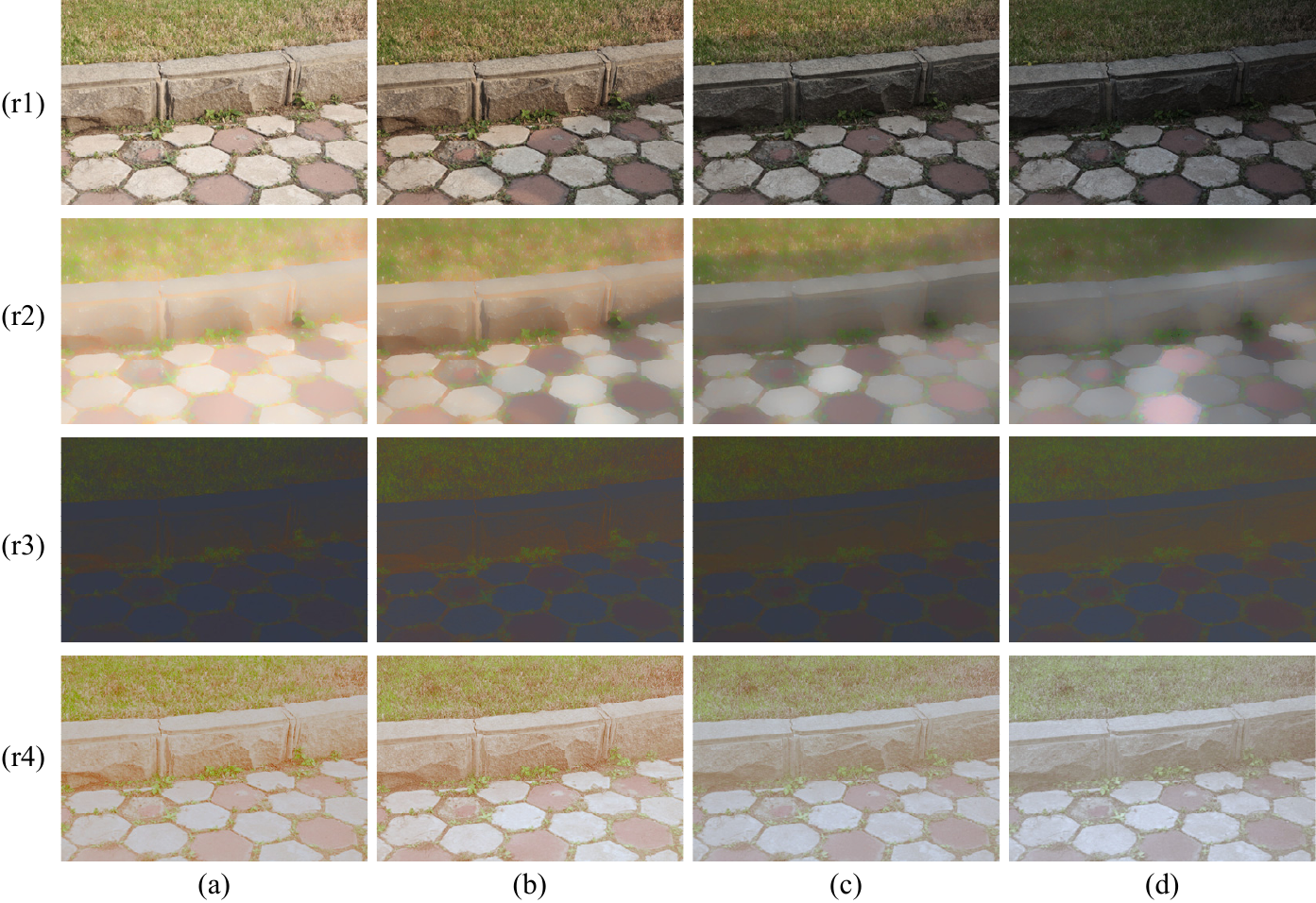}
    \caption{Images under different illumination conditions. For rows: (r1) original image, (r2) intrinsic image by Shen et al. \cite{shen2011},
    (r3) our color illumination invariant image, (r4) our shadow-free image. For columns: (a) (reference image) in daylight,
    (b) and (c) partly in daylight and partly in skylight, (d) in skylight.}
  \label{fig:intrinsic}
  \end{center}
\end{figure*}
\begin{table*}[h]
\renewcommand{\arraystretch}{1.3}
\caption{MSE and relative error between images}
\label{table:error}
\centering
\begin{tabular}{l||c|c|c||c|c|c}
\hline
 & \multicolumn{3}{c||}{RMSE} & \multicolumn{3}{c}{Relative Error (\%)}\\
 \hline
  Images & a and b & a and c &  a and d & a and b & a and c &  a and d \\
\hline\hline
Original img. & 45.96 &80.03 &101.85 & 18.02 &31.39 &39.94 \\
Intrinsic img. by \cite{shen2011}  & 45.17 &	72.67 & 86.36& 17.71 &	28.50 & 33.87 \\
Our invariant img.  & 4.98 &	4.49 &	4.67& 5.79 &	5.22 &	5.70 \\
Our shadow-free img. & 15.12& 17.87 & 24.88 & 6.00& 7.45 & 9.99 \\
\hline
\end{tabular}
\end{table*}

As shown in Fig. \ref{fig:intrinsic}(r3), the four color illumination invariant images generated by our method are basically the same.
It verifies that our orthogonal decomposition on an image yields an identical reflectivity result regardless of illumination condition.
The quantitative measurement is given in Tab.\ref{table:error}. Taken the images under total daylight and total skylight as an example, the RMSE difference
of the original images is decreased from 101.85 to 4.67 in our color illumination invariant images. Also, when considering the Relative Error, this value is decreased
from 39.94\% to 5.7\%. Even after a color restoration operation on illumination
invariant images, the RMSE difference (24.88) between our shadow-free images is still much smaller than original images. However, the intrinsic images obtained
by Shen et al. \cite{shen2011} still have a big difference, 86.36. The quantitative measurements demonstrate that our color illumination invariant images are
considerably closer to generate an intrinsic representation than Shen et al. \cite{shen2011}. The corresponding shadow-free images also maintain this
``intrinsic'' to some extent.

\section{Discussion and conclusion}
In this paper, we propose a novel, effective and fast method to obtain a shadow-free image from a single outdoor image.
Different from state-of-the-art methods that either need shadow detection or statistical learning, our shadow-free image is obtained by a pixel-wise
orthogonal decomposition and color restoration. Also, our shadow-free image better preserves the texture information compared to the state-of-the-art method.
Intrinsic to the illumination condition, our color illumination invariant image is useful for relighting by adjusting $\alpha$ information.
Besides, only requiring pixel-wise calculation, our method for color illumination invariant image and shadow-free image can be further applied to the real-time
applications to resist illumination variation.
\begin{figure*}[h]
  \centering
  \includegraphics[width=0.9\textwidth]{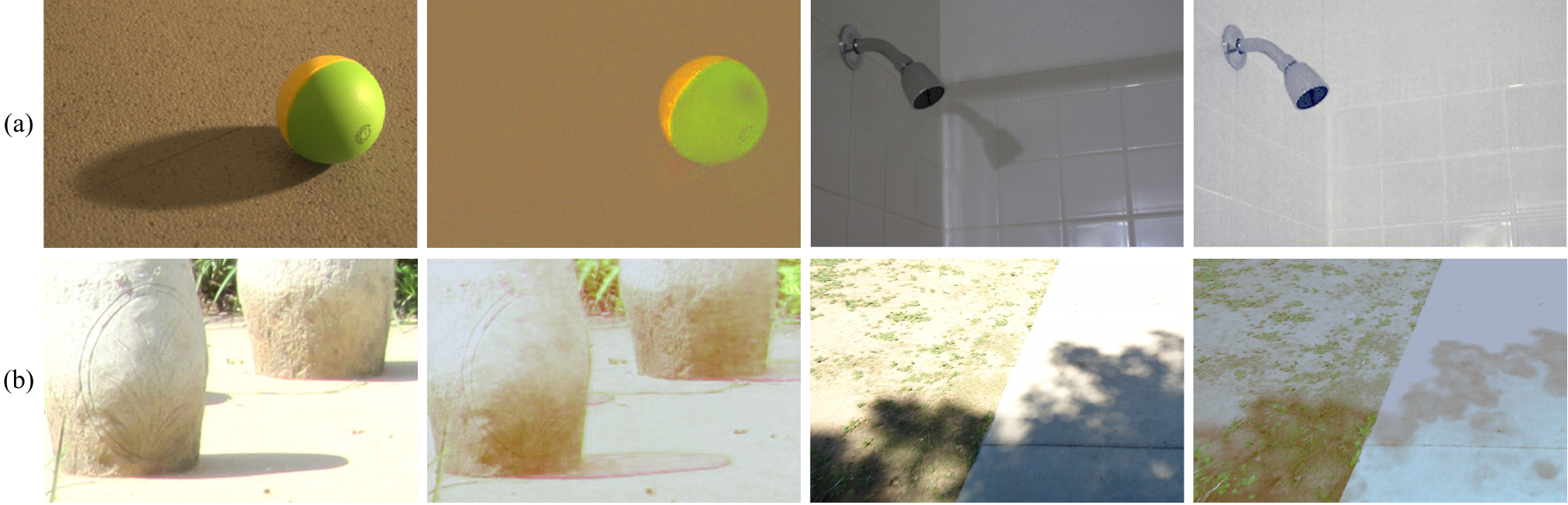}
    \caption{(a) Two original indoor images and the corresponding shadow-free images.
    (b) Two failure cases: original images with over-exposed regions and the corresponding shadow-free images. }
  \label{fig:improvement}
\end{figure*}

Despite that our pixel-wise orthogonal decomposition is designed for outdoor shadow images,
it is also valid for indoor situation by the correction of physical parameters ${\beta_1},{\beta_2},{\beta_3}$. Fig. \ref{fig:improvement}(a) gives
two results for our shadow-free images on the indoor shadow images. It shows that our pixel-wise orthogonal decomposition
method can be also applied in indoor situation.
According to our experiments, we also found two main limitations of our method: 1) Shadow-free image has somewhat color distortion, especially near neutral surface; 2) It does not work well on images with overexposure regions. The main reason for the failure on overexposure regions is: the pixel values on overexposure region are not physically correct values, they do not satisfy the linear model and we
cannot get their correct orthogonal decomposition. Fig. \ref{fig:improvement}(b) shows two failure examples for
our shadow-free images.
The modification and improvement of the model and method for an over-exposed regions, indoor image and more accurate color restoration are in process.

See $\mathbf{Appendix}$ for supporting content.

\appendix
\renewcommand\thesection{\appendixname~\Alph{section}}

\section{The uniqueness of particular solution $\bm{u_p}$}
\newtheorem*{theorem}{The Theorem}
\begin{theorem}
Let $A$ denote a 3 by 3 matrix, $rank(A) = 2 $ and $\bm{b}$  be a non-zero three dimensional vector. For linear equations
\begin{equation}\label{proof1}
  A\bm{u} = \bm{b}
\end{equation}
there exists a unique particular solution $\bm{u_p}$ of Eq. (\ref{proof1}) that is perpendicular to its normalized free solution.
\end{theorem}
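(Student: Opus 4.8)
The plan is to reduce the statement to the elementary geometry of an affine line in $\mathbb{R}^3$ together with one orthogonal-projection identity. First I would record what the hypotheses give: since $rank(A)=2$, the nullspace of $A$ is one-dimensional, so a normalized generator $\bm{u_0}$ with $A\bm{u_0}=\bm{0}$ and $\|\bm{u_0}\|=1$ exists and is unique up to sign (this is the normalized free solution of Eqs. (\ref{free_solution}) and (\ref{norm_free_solution})). In the situation where the theorem is used, Eq. (\ref{proof1}) is consistent, because the observed Log-RGB vector of the pixel is itself a solution; fixing any particular solution $\bm{u_s}$, linear algebra then tells us that the complete solution set of (\ref{proof1}) is the affine line $\{\bm{u_s}+\alpha\bm{u_0} : \alpha\in\mathbb{R}\}$, exactly as in Eq. (\ref{decomposition}).

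For existence I would write down the candidate explicitly as the component of $\bm{u_s}$ orthogonal to $\bm{u_0}$, namely $\bm{u_p} = \bm{u_s} - \langle \bm{u_s},\bm{u_0}\rangle\,\bm{u_0}$. Applying $A$ gives $A\bm{u_p} = A\bm{u_s} - \langle \bm{u_s},\bm{u_0}\rangle\, A\bm{u_0} = \bm{b} - \bm{0} = \bm{b}$, so $\bm{u_p}$ is again a solution of (\ref{proof1}); and pairing with $\bm{u_0}$ gives $\langle \bm{u_p},\bm{u_0}\rangle = \langle \bm{u_s},\bm{u_0}\rangle - \langle \bm{u_s},\bm{u_0}\rangle\,\|\bm{u_0}\|^2 = 0$, so it is perpendicular to the normalized free solution. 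This is precisely the vector produced by the decomposition formula used in the paper.

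For uniqueness I would take any two solutions $\bm{u_p}$ and $\bm{u_p}'$ of (\ref{proof1}) that are both orthogonal to $\bm{u_0}$. Their difference satisfies $A(\bm{u_p}-\bm{u_p}')=\bm{0}$, hence lies in the one-dimensional nullspace of $A$, so $\bm{u_p}-\bm{u_p}' = c\,\bm{u_0}$ for some $c\in\mathbb{R}$. Taking the inner product with $\bm{u_0}$ yields $0 = \langle \bm{u_p}-\bm{u_p}',\bm{u_0}\rangle = c\|\bm{u_0}\|^2 = c$, so $c=0$ and $\bm{u_p}=\bm{u_p}'$.

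I do not anticipate a genuine obstacle: once the solution set is identified as a line and the projection formula is written down, both halves are one-line computations, and this is essentially the statement that an affine line meets the hyperplane $\bm{u_0}^{\perp}$ in exactly one point — the line is not parallel to that hyperplane precisely because $\langle \bm{u_0},\bm{u_0}\rangle = 1 \neq 0$. The only points deserving a word of care are that (\ref{proof1}) is tacitly assumed solvable, which is automatic in the paper's application since $\bm{b}$ is by construction $A$ applied to a genuine pixel vector, and that the phrase ``the normalized free solution'' is unambiguous, which $rank(A)=2$ guarantees: $\bm{u_0}$ is then determined up to sign, and the condition $\bm{u_p}\perp\bm{u_0}$ is insensitive to that sign.
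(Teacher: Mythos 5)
Your proof is correct and follows essentially the same route as the paper's Appendix A: the same candidate $\bm{u_p} = \bm{u_s} - \langle \bm{u_s},\bm{u_0}\rangle\,\bm{u_0}$ for existence, and the same inner-product-with-$\bm{u_0}$ computation to kill the nullspace component for uniqueness. The only (welcome) refinements are that you explicitly verify $A\bm{u_p}=\bm{b}$ and explicitly flag the tacit solvability of the system, neither of which changes the argument.
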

\begin{proof}

1) Proof of the existence

Due to $rank(A) = 2 $ and $\bm{b}\neq\bm{0}$, we know that the dimension of the nullspace of Eq. (\ref{proof1}) is one.
Let $\bm{u_s}$ be a particular solution of Eq. (\ref{proof1}) and $\bm{u_0}$ be the normalized free solution of Eq. \ref{proof1}, i.e., $ A\bm{u_s} = \bm{b}$,  $A\bm{u_0} = \bm{0}$ and
${\rm{ }}{\left\| \bm{u_0} \right\|} = 1$.
We obtain a new solution of Eq. (\ref{proof1}) by
\begin{equation}\label{proof2}
  \bm{u_p} = \bm{u_s} - \left\langle {\bm{u_s},\bm{u_0}} \right\rangle \bm{u_0}
\end{equation}
By inner product operation on both sides of Eq. (\ref{proof2}) with $\bm{u_0}$, we have
\[\begin{array}{ll}
\left\langle {\bm{u_p},\bm{u_0}} \right\rangle &= \left\langle {\bm{u_s},\bm{u_0}} \right\rangle  - \left\langle {\bm{u_s},\bm{u_0}}
\right\rangle \left\langle {\bm{u_0},\bm{u_0}} \right\rangle  \\
&= \left\langle {\bm{u_s},\bm{u_0}} \right\rangle  - \left\langle {\bm{u_s},\bm{u_0}} \right\rangle \\
 &= 0
 \end{array}\]
Then we have $\bm{u_p} \bot \bm{u_0}$. Because that the dimension of the nullspace of Eq. (\ref{proof1}) is one,
we get that $\bm{u_p}$ is perpendicular to any non-zero free solution of Eq. (\ref{proof1}).

2) Proof of the uniqueness

Similar due to  $rank(A) = 2 $ and $\bm{b}\neq\bm{0}$,
the solution space of Eq. (\ref{proof1}) can be decomposed into a particular solution plus one-dimensional nullspace solutions.
Without loss of generality, for $\bm{u_p}$ in Eq. (\ref{proof2}) and the normalized free solution $\bm{u_0}$, any solution $\bm{u}$ of Eq. (\ref{proof1}) can be expressed as following,
\begin{equation}\label{proof3}
  \bm{u} = \bm{u_p} + a\bm{u_0}
\end{equation}
where $a \in R$. Suppose there exists another particular solution $\bm{u_p}^\prime$ of Eq. (\ref{proof1}), such that $\bm{u_p}^\prime \bot \bm{u_0}$ and $\bm{u_p}^\prime\neq\bm{u_p}$.
According to Formula \ref{proof3}, we can have
\begin{equation}\label{proof4}
  \bm{u_p}^\prime = \bm{u_p} + a^\prime\bm{u_0}
\end{equation}
By inner product operation on both sides of Formula \ref{proof4} with $\bm{u_0}$, we have
\[\left\langle {\bm{u_p}^\prime,\bm{u_0}} \right\rangle  = \left\langle {\bm{u_p},\bm{u_0}} \right\rangle  + a^\prime\left\langle {\bm{u_0},\bm{u_0}} \right\rangle \]
Because $\bm{u_p} \bot \bm{u_0}$, $\bm{u_p}^\prime \bot \bm{u_0}$, and ${\rm{ }}{\left\| \bm{u_0} \right\|} = 1$, we obtain $0=0+a^\prime$. Then $a^\prime=0$.

From Eq. (\ref{proof4}) and $a^\prime=0$, we can get that the solution of Eq. (\ref{proof1}), which is perpendicular to any non-zero free solution of Eq. (\ref{proof1}), is unique.

\end{proof}

For the case $\bm{b}=0$, we set  $\bm{u_p}=0$.

\section{Our results on more outdoor images}
In this section we show our results on
more outdoor images. For each case, we provide the original image, color illumination invariant image, $\alpha$ information, and the final shadow-free image.

\begin{figure}[!h]
  \centering
  \begin{center}\vspace{-1mm}
  \includegraphics[width=0.90\textwidth,height=1.08588\textwidth]{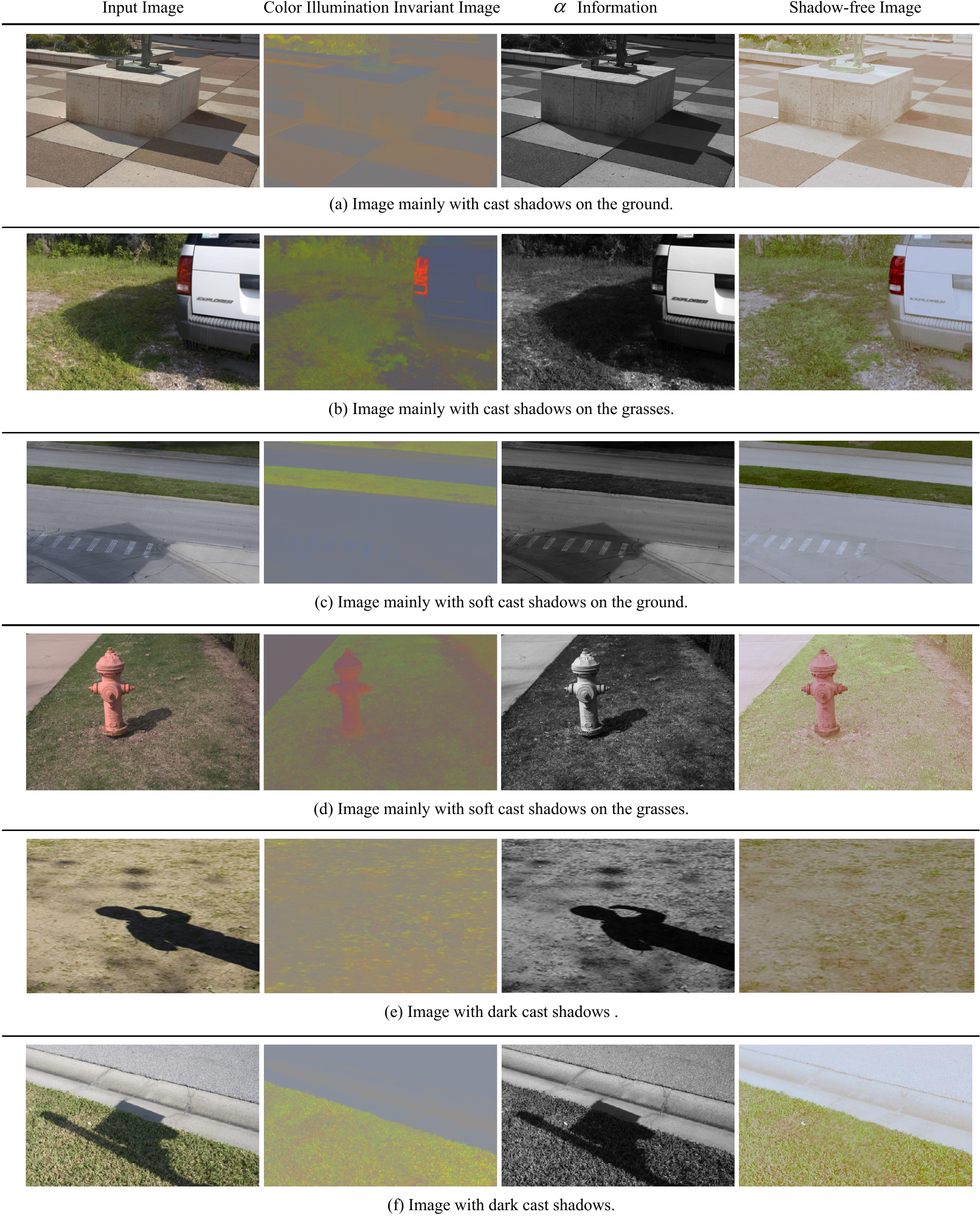}
    \caption{Our results on some cast shadow images. Both images with soft shadows and dark shadows are given.}
  \label{fig:add1}\vspace{-10mm}
    \end{center}\vspace{-7mm}
\end{figure}
 \clearpage
\begin{figure*}[!h]
  \centering
  \begin{center}
  \includegraphics[width=0.92\textwidth]{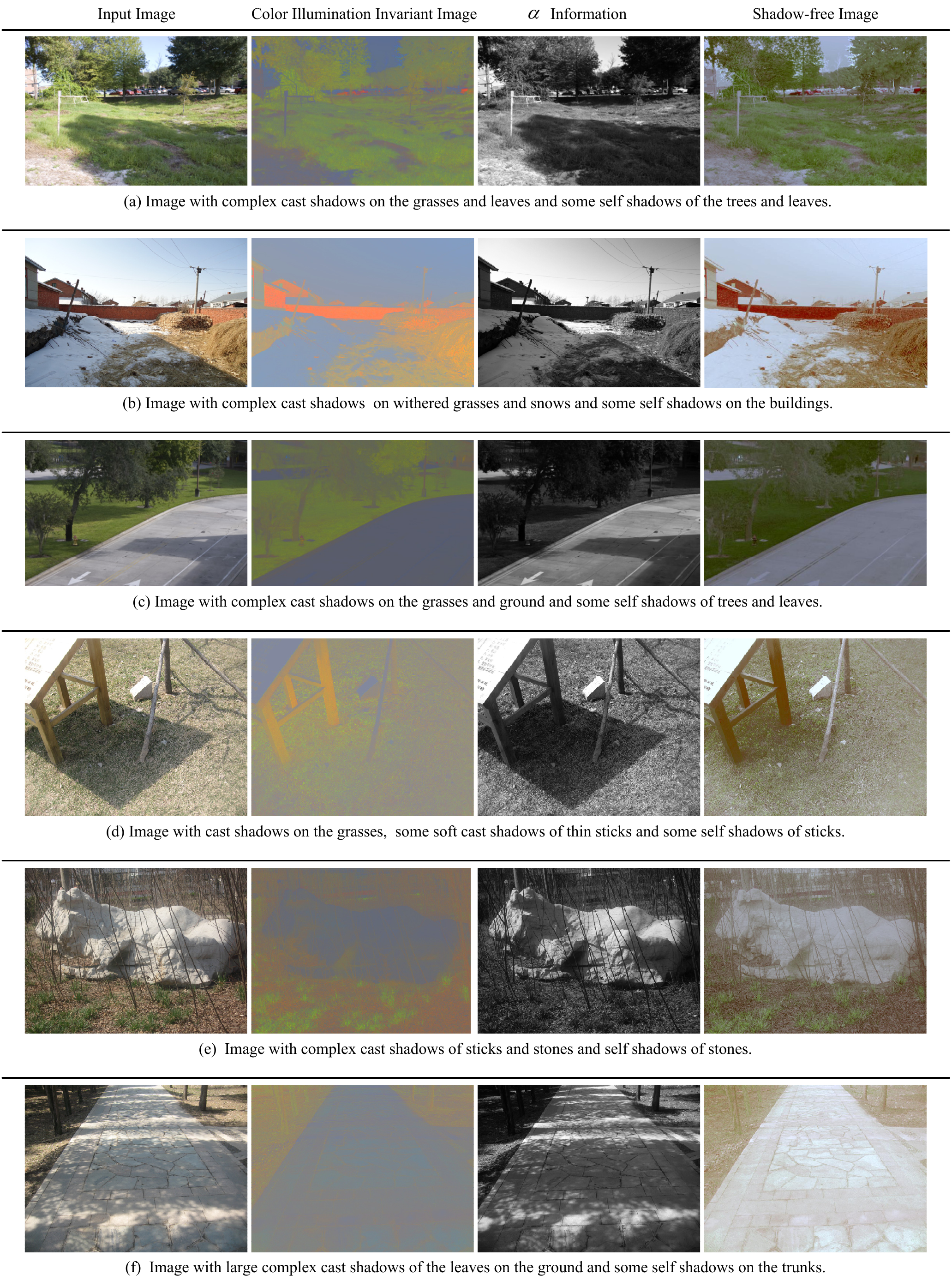}
    \caption{Our results on more complex outdoor scenes, which mainly contain cast shadows
and self shadows.}
  \label{fig:add2}
    \end{center}
\end{figure*}
\section*{Acknowledgments}
This work was supported by the Natural Science Foundation
of China under Grant No.61102116, 61473280, and 61333019. We thank Eli Arbel for his provision of the source code of \cite{arbel2007texture,arbel2011}.

\end{document}